\newcommand{\bmat}[1]{\begin{bmatrix}#1\end{bmatrix}}
\newcommand{\tp}{\mathsf{T}}
\newcommand{\norm}[1]{\lVert{#1}\rVert}
\newcommand{\R}{\mathbb{R}}
\newtheorem{theorem}{Theorem}
\newtheorem{remark}{Remark}
\title{\LARGE \bf
Improved Scalable Lipschitz Bounds for Deep Neural Networks
}
\author{
Usman Syed$^{1}$ and Bin Hu$^{1}$
\thanks{*This work was supported by  the
AFOSR award FA9550-23-1-0732 and the NSF award
CAREER-2048168.}
\thanks{$^{1}$Usman Syed and Bin Hu are with the Coordinated Science Laboratory
and the Department of Electrical and Computer Engineering, University
of Illinois Urbana–Champaign, Champaign, IL 61801 USA
        {\tt\small usyed3@illinois.edu; binhu7@illinois.edu}}
}
\begin{document}

\maketitle
\thispagestyle{empty}
\pagestyle{empty}

\begin{abstract}
Computing tight Lipschitz bounds for deep neural networks is crucial for analyzing their robustness and stability, but existing approaches either produce relatively conservative estimates or rely on semidefinite programming (SDP) formulations (namely the LipSDP condition) that face scalability issues.
Building upon ECLipsE-Fast, the state-of-the-art Lipschitz bound method that avoids SDP formulations, we derive a new family of improved scalable Lipschitz bounds that can be combined to outperform ECLipsE-Fast. Specifically, we leverage more general parameterizations of feasible points of LipSDP to derive various closed-form Lipschitz bounds, avoiding the use of SDP solvers. In addition, we show that our technique encompasses ECLipsE-Fast as a special case and leads to a much larger class of scalable Lipschitz bounds for deep neural networks. Our empirical study shows that our bounds improve ECLipsE-Fast, further advancing the scalability and precision of Lipschitz estimation for large neural~networks.

\end{abstract}
\begin{keywords}%
Fine-grained Lipschitz bounds, deep neural networks, scalability
\end{keywords}

\section{INTRODUCTION}

Neural networks (NNs) have achieved remarkable success across various domains such as computer vision, natural language processing, and feedback control.  The (global) Lipschitz constant that measures the maximum possible ratio of output change to input perturbation in the $\ell_2$-to-$\ell_2$ sense, is a key metric for neural network robustness and generalization~\cite{tsuzuku2018lipschitz, bartlett2017spectrally}. Importantly, for a neural network controller, its Lipschitz constant can also be used to  certify the closed-loop stability~\cite{jin2020stability,fazlyab2021introduction,fabiani2022reliably}. 
However, computing the exact Lipschitz constant for deep neural networks is NP-hard \cite{LipNP2018}, leading to significant research efforts in developing computationally tractable Lipschitz upper bounds.

Practical approaches for computing (global) $\ell_2$ Lipschitz bounds of NNs mainly fall into two categories. The first category relies on semidefinite programming (SDP) formulations. Specifically, for $[0,1]$-slope-restricted activation functions, the LipSDP approach \cite{fazlyab2019efficient} provides
the least conservative Lipschitz upper bounds that can be computed with polynomial time guarantees, paving the way for many developments in analysis and design of robust NNs \cite{pauli2021training,pauli2022neural,revay2020lipschitz,araujo2023a,wang2022,havens2023exploiting,pauli2023novel,barbara2024robust,wang2023direct}. However, how to scale LipSDP for very large NNs is still an open research question being heavily investigated \cite{xue2022chordal,pauli2023lipschitz,wang2024scalability,pauli2024lipschitz,xueclipse}.

In contrast, the second category of bounding methods relies on closed-form Lipschitz bounds that are computationally more scalable than SDP-based methods but often produce conservative estimates. 
For example, the so-called matrix norm product bound \cite{szegedy2014intriguing} gives arguably the most popular Lipschitz bound for large-scale NNs, and can be efficiently computed using power iteration or other advanced methods \cite{horn2012matrix,delattre2023efficient,grishina2024tight}. However,
this bound is based on the simple fact that the global Lipschitz constant of a NN can be upper bounded by the
product of Lipschitz constants of individual layers, and can be quite conservative in many situations.
More recently, ECLipsE-Fast \cite{xueclipse} has emerged as the state-of-the-art method in this second category, 
leveraging a recursive analytical decomposition approach to achieve less conservative bounds than the naive matrix norm product approach consistently while
preserving a similar level of scalability.\footnote{In \cite{xueclipse}, another SDP-based method termed as ECLipsE has also been developed. However, ECLipsE-Fast is much more scalable than ECLipsE.} 
However, there is still some gap between ECLipsE-Fast and the original LipSDP, and it is natural to ask whether one can derive other closed-form Lipschitz bounds for NNs to reduce the conservatism in ECLipsE-Fast.

In this paper, we provide an affirmative answer to the above question and present a new family of closed-form Lipschitz bounds that significantly advances the state of the art in scalable Lipschitz bound computation. Our approach extends the recursive framework of ECLipsE-Fast via bridging it with more general analytical parameterizations of the feasible points for LipSDP.
Specifically, built upon analytical solutions of a particular small matrix inequality, we 
derive a diverse set of new scalable Lipschitz bounds that can be combined to improve the original EClipsE-Fast method. This generalization allows us to maintain the computational efficiency of avoiding SDP solvers while substantially reducing the conservatism in ECLipsE-Fast. We show theoretically that our proposed bounds strictly encompass ECLipsE-Fast as a special case, and provide extensive numerical study to demonstrate that our bounds consistently improve upon ECLipsE-Fast across various network architectures.

\section{Preliminaries}
\label{sec:preliminiries}

\subsection{Problem Statement: Lipschitz Estimation of NNs}
Consider a standard feedforward NN that maps the input $x\in \R^{n_0}$ to the output $f_\theta(x)$ as follows:
\begin{align}\label{eq:NN}
\begin{split}
x^0&=x\\
x^k&=\phi(W_k x^{k-1}+b_k), \,k=1,\ldots, l\\
f_\theta(x)&=W_{l+1} x^l+b_{l+1},
\end{split}
\end{align}
where $W_k\in \R^{n_k\times n_{k-1}}$ and $b_k \in \R^{n_k}$ denote the weight matrix and the bias vector at the $k$-th layer,
respectively.  The network parameter is denoted by $\theta = \{W_k, b_k\}_{k=1}^{l+1}$.
The activation function $\phi$ is assumed to be $[0,1]$-slope-restricted. Standard activation functions including ReLU, \text{sigmoid}, and $\tanh$ all satisfy this assumption. 
The output $f_\theta(x)$ is an $n_{l+1}$-dimensional vector.  Denote the 2-norm of any vector as $\norm{\cdot}$, and the goal of the Lipschitz constant estimation is to   
 find a tight Lipschitz bound $L > 0$ such
that the following inequality holds
\begin{align}\label{eq:Lip}
\norm{f_\theta(x)-f_\theta(y)} \le L\norm{x-y},\,\forall x,y\in \R^{n_0}. 
\end{align}
Next, we will review several existing approaches for obtaining such Lipschitz bounds.

\subsection{Norm Product Bound}

The simplest Lipschitz upper bound for \eqref{eq:NN} is the so-called matrix norm product bound. Specifically, under the assumption that the activation $\phi$ is 1-Lipschitz, \eqref{eq:Lip} will hold with the following choice of $L$ \cite{szegedy2014intriguing}:
\begin{align}\label{eq:productbound}
L=\prod_{k=1}^{l+1} \sigma_{\max}(W_k)
\end{align}
where $\sigma_{\max}(\cdot)$ denotes the largest singular value (spectral norm). 
Essentially, the product of the spectral norm of every layer’s weight naturally leads to a Lipschitz bound.  The above product bound can be efficiently computed using the power iteration method\footnote{For convolutional layers, more advanced methods such as Gram iteration \cite{delattre2023efficient} and tensor norm bounds \cite{grishina2024tight} are available.}. The ease of use makes \eqref{eq:productbound} arguably the most popular Lipschitz bound for large-scale NNs at the practical level. However, the main issue is that \eqref{eq:productbound} can be overly conservative. Intuitively, the product bound holds for any 1-Lipschitz activation functions, and do not really exploit the $[0,1]$-slope-restricted property of ReLU, \text{sigmoid}, and $\tanh$ activations. This motivates extensive recent research efforts in developing improved Lipschitz bounds for NNs.

\subsection{LipSDP and Variants}

In  \cite{fazlyab2019efficient}, semidefinite programming (SDP) techniques have been leveraged to develop the LipSDP method, which provides the least conservative Lipschitz upper bounds with polynomial time guarantees. 
The LipSDP result \cite{fazlyab2019efficient,pauli2021training,pauli2022neural} states that the neural network \eqref{eq:NN} is $\sqrt{\gamma}$-Lipschitz if the following matrix inequality is feasible
\begin{align}\label{eq:LipSDP}
\bmat{
    I & -W_1^\tp  \Lambda_1  & 0 & \cdots & 0 \\
   - \Lambda_1 W_1 & 2\Lambda_1  &  \ddots & \ddots & \vdots\\
   0 & \ddots & \ddots & -W_l^\tp \Lambda_l & 0\\
   \vdots & \ddots &  -\Lambda_l W_l & 2\Lambda_l  & -W_{l+1}^\tp  \\
   0 & \cdots & 0 & -W_{l+1} &  \gamma I
   }\succeq 0,
\end{align}
where the decision variable matrix $\Lambda_k$ is required to be diagonal with non-negative entries for $k=1,\ldots, l$. 
For a given NN with $\{W_k\}_{k=1}^{l+1}$ fixed, the above condition is linear in $\{\Lambda_k\}_{k=1}^l$ and $\gamma$. To obtain tighter Lipschitz bounds, one can minimize $\gamma$ subject to the above linear matrix inequality (LMI) condition \eqref{eq:LipSDP}. This leads to a SDP problem which is termed as LipSDP. The formulation of the LMI condition \eqref{eq:LipSDP} requires exploiting the $[0,1]$-slope-restrictedness of the activation functions, significantly reducing the conservatism of the matrix norm product bound \eqref{eq:productbound}. 

However, for deep and wide neural networks used in practice, LipSDP faces severe scalability issues. 
There has been some recent progress in improving the scalability of LipSDP by either decomposing it into smaller SDPs \cite{xue2022chordal,pauli2024lipschitz,xueclipse} or transforming it to equivalent forms that can be solved using first-order iterative optimization algorithms \cite{wang2024scalability}. Despite such progress,  LipSDP-based methods are in general still much less scalable than the matrix norm product bound technique. More research efforts are still needed to mitigate the scalability/tightness trade-off in computing Lipschitz bounds for NNs.

\subsection{ECLipsE-Fast}

Interestingly, a recently-developed method termed as ECLipsE-Fast  \cite{xueclipse} provides a meaningful middle ground between the matrix norm product bound \eqref{eq:productbound} and LipSDP \eqref{eq:LipSDP}. 
ECLipsE-Fast is developed based on choosing a specific feasible point\footnote{In \cite{xueclipse}, some geometrical explanations are provided to justify why this specific feasible point is a reasonable choice for deriving Lipschitz bounds.} for LipSDP \eqref{eq:LipSDP} to derive a computable Lipschitz bound that does not require solving any SDPs. ECLipsE-Fast leverages a recursive scheme to compute the Lipschitz bound. Specifically, initialize $M_1=I_{n_0}$, and choose $ \Lambda_1=\frac{1}{\sigma_{\max}(W_1 M_1^{-1} W_1^\tp)}I_{n_1}$.  For $k=2, \cdots, l$, perform the following recursion:
\begin{align}\label{eq:ECLipsE}
M_k&=2\Lambda_{k-1}-\Lambda_{k-1} W_{k-1} M_{k-1}^{-1} W_{k-1}^\tp \Lambda_{k-1},\\\label{eq:ECLipsE2}
\Lambda_k& =\frac{1}{\sigma_{\max}(W_k M_k^{-1} W_k^\tp)} I
\end{align}
Then the NN \eqref{eq:NN} is guaranteed to be $\sqrt{\sigma_{\max}(W_{l+1}M_{l+1}^{-1}W_{l+1}^\tp)}$-Lipschitz with $M_{l+1}=2\Lambda_l-\Lambda_l W_l M_l^{-1} W_l^\tp \Lambda_l$. 

ECLipsE-Fast does not require solving SDPs, and the required number of matrix norm
calculations is on the order of $O(l)$. When applying the power iteration to compute $\sigma_{\max}(W_k M_k^{-1} W_k^\tp)$ for each $k$, one does not need to invert $M_k$ explicitly. Notice that the power iteration method only requires computing $W_k M_k^{-1} W_k^\tp v$ fast for any given $v$, which is a much simpler task than computing $M_k^{-1}$.

Empirically, it has been observed that  ECLipsE-Fast consistently gives much less conservative numerical results than the matrix norm product bound \eqref{eq:productbound}. In addition, ECLipsE-Fast is much more scalable than SDP-based Lipschitz analysis methods \cite{xue2022chordal,pauli2024lipschitz,xueclipse,wang2024scalability} despite being more conservative than these methods.
We note that the analysis in \cite{xueclipse} provides some geometrical explanations to justify the use of ECLipsE-Fast on the conceptual level. 
However, there is no formal theory addressing whether ECLipsE-Fast
is optimal in any sense.
It is very natural to ask whether we can derive other SDP-free Lipschitz bounds that are less conservative than ECLipsE-Fast. In this next section, we will answer this question by deriving new Lipschitz bounds that can be used to improve ECLipsE-Fast.

\section{Main Results}
\label{sec:main}

In this section, we present our main results. First, we show that we can characterize the feasible points of LipSDP via analyzing a small matrix inequality whose size does not scale with the NN depth. Next, we convert various analytical closed-form solutions of this matrix inequality to  a new family of scalable Lipschitz bounds for NNs. Finally, we discuss how to combine our derived bounds to reduce the conservatism in ECLipsE-Fast.

\subsection{Characterizing Feasible Points of LipSDP}

If one can express some feasible points of LipSDP \eqref{eq:LipSDP} as closed-form functions of $\{W_k\}_{k=1}^l$, then those expressions naturally lead to Lipschitz bounds for \eqref{eq:NN}. As a matter of facts, this perspective unifies the norm product bound \eqref{eq:productbound} and the ECLipsE-Fast, both of which can be derived from  characterizing special feasible points of \eqref{eq:LipSDP}. Specifically, it is easy to verify that choosing $\Lambda_k=\frac{1}{\prod_{m=1}^k \sigma_{\max}^2(W_m)}I$ and $\gamma=\left(\prod_{k=1}^{l+1} \sigma_{\max}(W_k)\right)^2$ leads to a feasible point for \eqref{eq:LipSDP}, directly recovering the norm product bound \eqref{eq:productbound}.  In addition, the derivations in  \cite{xueclipse} show that the choice of $\{\Lambda_k\}_{k=1}^{l+1}$ and $\gamma$ in ECLipsE-Fast also provides a feasible point for LipSDP. 
By comparing the expressions of $\{\Lambda_k\}$, one can see that the choice of $\{\Lambda_k\}$ for the norm product bound is decoupled in the sense that $\Lambda_k$ does not depend on $\Lambda_m$ for $m\neq k$. In contrast, 
ECLipsE-Fast exploits the structure of LipSDP to introduce the dependence of $\Lambda_k$ on $\Lambda_m$ (for $m<k$), leading to less conservative Lipschitz bounds.

The above discussion motivates us to ask the following question:
 Can we obtain other closed-form expressions of the feasible points of \eqref{eq:LipSDP} that will naturally lead to new closed-form Lipschitz bounds for \eqref{eq:NN}?
To give an affirmative answer to this question, we first obtain the following useful result.

\vspace{0.1in}

\begin{theorem}\label{thm:main}
Define $M_1=I_{n_0}$. Choose $\Lambda_1$ to be any invertible diagonal matrix satisfying $2\Lambda_1-\Lambda_1 W_1 W_1^\tp \Lambda_1\succ 0$.  Next, define $M_2=2\Lambda_1-\Lambda_1 W_1 M_1^{-1} W_1^\tp \Lambda_1$ and choose $\Lambda_2$ to be  any diagonal matrix satisfying $2\Lambda_2-\Lambda_2 W_2 M_2^{-1} W_2^\tp \Lambda_2\succ 0$. For $3\le k \le l$, continue this recursion by defining $M_k=2\Lambda_{k-1}-\Lambda_{k-1} W_{k-1} M_{k-1}^{-1} W_{k-1}^\tp \Lambda_{k-1}$ and choose $\Lambda_k$ to be  any diagonal matrix satisfying $2\Lambda_k-\Lambda_k W_k M_k^{-1} W_k^\tp \Lambda_k\succ 0$. Then LipSDP \eqref{eq:LipSDP} is feasible with the resultant choice of $\{\Lambda_k\}_{k=1}^l$ and $\gamma=\sigma_{\max}\left(W_{l+1}(2\Lambda_l-\Lambda_l W_l M_l^{-1} W_l^\tp \Lambda_l)^{-1}W_{l+1}^\tp\right)$.
\end{theorem}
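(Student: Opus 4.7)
The plan is to prove feasibility of the LipSDP LMI by performing Schur-complement reductions repeatedly from the top-left corner, thereby collapsing the large block tridiagonal matrix into a single $2\times 2$ block matrix whose positivity can be read off directly.

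I would first observe that the LipSDP matrix in \eqref{eq:LipSDP} is block tridiagonal with blocks indexed $0,1,\ldots,l+1$: the $(k,k)$-diagonal block is $2\Lambda_k$ for $1\le k\le l$ (with $(0,0)$ block $I$ and $(l{+}1,l{+}1)$ block $\gamma I$), and the only nonzero off-diagonal blocks are $(k-1,k)=-W_k^\tp\Lambda_k$ for $1\le k\le l$ together with $(l,l+1)=-W_{l+1}^\tp$. Because blocks between non-adjacent indices vanish, a Schur-complement elimination of any one diagonal block only perturbs the diagonal block immediately following it. This is the structural fact that makes the recursion clean.

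Next I would carry out the induction. The base step takes the Schur complement with respect to the $(0,0)$ block $M_1=I_{n_0}\succ 0$: the $(1,1)$ block is updated from $2\Lambda_1$ to $2\Lambda_1-\Lambda_1 W_1 W_1^\tp\Lambda_1=M_2$, which is $\succ 0$ by the hypothesis on $\Lambda_1$. Given $M_k\succ 0$ as the effective $(k-1,k-1)$ block after $k-1$ reductions (by the inductive hypothesis), the $k$-th Schur complement eliminates $M_k$ and updates the $(k,k)$ block from $2\Lambda_k$ to $2\Lambda_k-\Lambda_k W_k M_k^{-1} W_k^\tp\Lambda_k=M_{k+1}$, which is $\succ 0$ by the chosen diagonal inequality in the theorem statement. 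All other blocks of the reduced matrix are unchanged because the $(k-1,k+1)$ entry of the block matrix is zero. Iterating this for $k=1,\ldots,l$, the full LMI is equivalent to positive semidefiniteness of
\begin{equation*}
\bmat{M_{l+1} & -W_{l+1}^\tp \\ -W_{l+1} & \gamma I},
\end{equation*}
where $M_{l+1}=2\Lambda_l-\Lambda_l W_l M_l^{-1} W_l^\tp\Lambda_l\succ 0$.

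Finally I would apply one last Schur complement to this $2\times 2$ block matrix: positive semidefiniteness is equivalent to $\gamma I\succeq W_{l+1}M_{l+1}^{-1}W_{l+1}^\tp$, which in turn is equivalent to $\gamma\ge \sigma_{\max}\!\left(W_{l+1}M_{l+1}^{-1}W_{l+1}^\tp\right)$. Choosing $\gamma$ equal to this spectral norm certifies feasibility.

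The only real obstacle is bookkeeping the recursive Schur complement carefully so that the off-diagonal blocks $-W_{k+1}^\tp\Lambda_{k+1}$ are genuinely preserved at each stage; this follows from the observation above that non-adjacent block entries of the LMI are zero, so the standard Schur-complement update formula perturbs only the single diagonal block adjacent to the one being eliminated. Everything else is a straightforward induction, and the hypotheses $2\Lambda_k-\Lambda_k W_k M_k^{-1}W_k^\tp\Lambda_k\succ 0$ are exactly what is needed to keep the pivot blocks strictly positive so the recursion can continue.
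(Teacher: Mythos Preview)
Your proposal is correct and follows essentially the same approach as the paper: recursively applying the Schur complement $(l+1)$ times to the block tridiagonal LipSDP matrix, using the strict positivity hypotheses $M_{k+1}\succ 0$ to keep the pivot blocks invertible and finishing with the semidefinite Schur condition $\gamma I\succeq W_{l+1}M_{l+1}^{-1}W_{l+1}^\tp$. Your write-up is in fact more explicit than the paper's, which only sketches the argument by pointing to Theorem~3 of \cite{xueclipse} and noting that the semidefinite (rather than strictly definite) version of the Schur complement lemma is needed at the final step.
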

\begin{proof}
The proof is similar to the proof of Theorem~3 in \cite{xueclipse}. The only difference is that we need to apply the Schur complement lemma for positive semidefinite matrices here, while Theorem 3 in \cite{xueclipse} relies on the simpler version of Schur complement lemma for positive definite matrices. Specifically, one can just recursively apply the Schur complement lemma for positive semidefinite matrices to \eqref{eq:LipSDP} for $(l+1)$ times, and that naturally leads to the conclusion.
\end{proof}

\vspace{0.1in}

Therefore, if we know how to find diagonal $\Lambda_k$ satisfying $2\Lambda_k-\Lambda_k W_k M_k^{-1} W_k\Lambda_k\succ 0$ for given $W_k$ and $M_k$ ($1\le k \le l)$, then we immediately have a recursive formula to compute the Lipschitz upper bound for the original neural network to be analyzed. 
To recover ECLipsE-Fast using Theorem \ref{thm:main}, we can simply  choose $\Lambda_k=\frac{1}{\sigma_{\max}(W_k M_k^{-1} W_k^\tp)}I$ and show
\begin{align*}
&2\Lambda_k-\Lambda_k W_k M_k^{-1} W_k^\tp \Lambda_k\\
=&\frac{2}{\sigma_{\max}(W_k M_k^{-1} W_k^\tp)}I-\frac{1}{\sigma_{\max}^2(W_k M_k^{-1} W_k^\tp)}W_k M_k^{-1} W_k^\tp\\
\succeq & \frac{2}{\sigma_{\max}(W_k M_k^{-1} W_k^\tp)}I-\frac{1}{\sigma_{\max}(W_k M_k^{-1} W_k^\tp)}I\\
=&\frac{1}{\sigma_{\max}(W_k M_k^{-1} W_k^\tp)}I\succ 0.
\end{align*}
By Theorem \ref{thm:main}, this directly recovers ECLipsE-Fast. 
Interestingly, there are many different ways to specify $\Lambda_k$ to ensure $2\Lambda_k-\Lambda_k W_k M_k^{-1} W_k\Lambda_k\succ 0$, which will be discussed next.

\subsection{A New Set of Closed-form Lipschitz Bounds for NNs}
\label{sec:LipBounds}

Now we leverage Theorem \ref{thm:main} to provide various new Lipschitz bounds for \eqref{eq:NN}. Later in Section \ref{sec:improve}, we will discuss how to use these bounds to improve ECLipsE-Fast. 

Suppose we will only use nonsingular $\Lambda_k$. Then we can rewrite the condition
$2\Lambda_k-\Lambda_k W_k M_k^{-1} W_k^\tp \Lambda_k\succ 0$ as
\begin{align}\label{eq:key}
\Lambda_k^{-1}\succ \frac{1}{2}W_k M_k^{-1} W_k^\tp
\end{align}
where $\Lambda_k^{-1}$ is known to be diagonal and positive definite.
Now we only need to think about how to choose a diagonal matrix such that it is greater than the matrix $\frac{1}{2}W_k M_k^{-1} W_k^\tp$ in the definite sense. We can immediately derive the following new Lipschitz bounds.

\vspace{0.1in}

\noindent\textbf{ECLipsE-SN:} For every $k$, we can choose $\Lambda_k$ as
\begin{align}\label{eq:feasible1}
\Lambda_k=\frac{c_k}{\sigma_{\max}(W_k M_k^{-1} W_k^\tp)}I,\,\, \forall c_k\in (0,2)
\end{align}
With this choice, it is straightforward to verify
$\Lambda_k^{-1}=\frac{1}{c_k}\sigma_{\max}(W_k M_k^{-1} W_k^\tp) I\succ \frac{1}{2}\sigma_{\max}(W_k M_k^{-1} W_k^\tp) I\succeq  \frac{1}{2} W_k M_k^{-1} W_k$, and hence \eqref{eq:key} holds. 
Therefore, by Theorem \ref{thm:main}, the recursive bound based on \eqref{eq:feasible1} and \eqref{eq:ECLipsE} gives a valid Lipschitz bound for \eqref{eq:NN}. We termed this bound as ECLipsE-SN since it is proved based on the spectral norm bound $W_k M_k^{-1} W_k^\tp \preceq \sigma_{\max}(W_k M_k^{-1} W_k^\tp) I$. 
One can view ECLipsE-Fast as a special case of ECLipsE-SN with $c_k=1$ for all $k$. In  Section \ref{sec:improve}, we will demonstrate the choice of $c_k=1$ introduces unnecessary conservatism.

\vspace{0.1in}

\noindent\textbf{ECLipsE-GC}: Next we use the Gershgorin circle theorem to derive ECLipsE-GC, which offers another Lipschitz bound. For simplicity, we define $\Gamma_k=W_k M_k^{-1} W_k^\tp$. By Gershgorin circle theorem, as long as the $i$-th diagonal entry of $\Lambda_k$ satisfies $\Lambda_k^{-1}(i,i)>\frac{1}{2}\sum_{j} \vert \Gamma_k(i,j)\vert$, then $(\Lambda_k^{-1}-\Gamma_k)$ is a diagonally dominant matrix, and \eqref{eq:key} holds as desired.  
Therefore, we can choose $\Lambda_k$ as a diagonal matrix
 whose $(i,i)$-entry is set up as follows. If $\sum_{j} \vert \Gamma_k(i,j)\vert>0$, we have
\begin{align}\label{eq:feasible2}
\Lambda_k(i,i)=\frac{c_k}{\sum_{j} \vert \Gamma_k(i,j)\vert},\,\,\,\forall c_k\in (0,2)
\end{align}
If $\sum_{j} \vert \Gamma_k(i,j)\vert=0$, we set
\begin{align}\label{eq:feasible3}
\Lambda_k(i,i)=\tilde{d},\,\,\,\forall \tilde{d}\in (0,\infty)
\end{align}
Combining \eqref{eq:feasible2}, \eqref{eq:feasible3}, and \eqref{eq:ECLipsE}, we directly obtain another Lipschitz bound. Interestingly, this bound replaces the largest singular value computation with the summation operation in \eqref{eq:feasible2}, providing extra computation advantages for extremely large-scale problems. 

\vspace{0.1in}

\noindent\textbf{ECLipsE-GCS}: We can use the well-known diagonal scaling variant of the Gershgorin circle theorem to further generalize ECLipsE-GC. Recall $\Gamma_k=W_k M_k^{-1} W_k^\tp$. Given arbitrary non-negative $\{q_i\}$, as long as $\Lambda_k^{-1}(i,i)>\frac{1}{2p_i}\sum_j p_j\vert \Gamma_k(i,j)\vert$, then \eqref{eq:key} holds. This is an immediate consequence of \cite[Corollary 6.1.6]{horn2012matrix}. 
In other words, we can choose \( \tilde{\Gamma}_k:= Q^{-1} \Gamma_k Q \) with arbitrary positive definite diagonal matrix $Q$, and then set the diagonal matrix $\Lambda_k$ as\footnote{Obviously the $(i,i)$-th entry of $Q$ is $q_i$. Notice \eqref{eq:feasible_GCS1} requires $\sum_{j} \vert \tilde{\Gamma}_k(i,j)\vert>0$. If $\sum_{j} \vert \tilde{\Gamma}_k(i,j)\vert=0$, we set
$\Lambda_k(i,i)=\tilde{d},\,\,\,\forall \tilde{d}\in (0,\infty)$.}
\begin{align}\label{eq:feasible_GCS1}
\Lambda_k(i,i)=\frac{c_k}{\sum_{j} \vert \bar{\Gamma}_k(i,j)\vert},\,\,\,\forall c_k\in (0,2)
\end{align}
Combining \eqref{eq:feasible_GCS1} and \eqref{eq:ECLipsE}
immediately leads to another Lipschitz bound. A natural choice for $\{q_i\}$ is $q_i=\Gamma_k(i,i)$ for $\Gamma_k(i,i)>0$ and $q_i=\epsilon>0$ with $\epsilon$ being small for $\Gamma_k(i,i)=0$. We can view ECLipsE-GCS as a scaling version of ECLipsE-GC. 
  Similar to ECLipsE-GC, the ECLipsE-GCS also enjoys similar computational advantage compared to ECLipse-Fast as it avoids the computation of spectral norm at each iteration.

\noindent\textbf{ECLipsE-Shift}: Given an arbitrary diagonal matrix $T_k$, then \eqref{eq:key} holds if and only if the following shifted matrix inequality holds
\begin{align}\label{eq:key2}
    \Lambda_k^{-1}-T_k\succ \frac{1}{2}\Gamma_k-T_k
\end{align}
A sufficient condition ensuring \eqref{eq:key2} to hold is given by
\[
\Lambda_k^{-1}-T_k \succ c_k \sigma_{max}(0.5\Gamma_k-T_k)I, \quad \forall c_k \in  (1,\infty)
\]
This immediately leads to another feasible point choice  
by specifying the diagonal entries of $\Lambda_k$ as
\begin{equation}\label{eq:feasible_Off1}
    \Lambda_k(i,i)=\frac{1}{T_k(i,i)+c_k \sigma_{max}(0.5 \Gamma_k-T_k)},\quad \forall c_k>1
\end{equation}
A natural choice for the diagonal matrix \(T_k\) is to set the $(i,i)$-th entry of $T_k$ as the $(i,i)$-th entry of $\frac{1}{2}\Gamma_k$.  Combining \eqref{eq:feasible_Off1} and \eqref{eq:ECLipsE} leads to ECLipsE-Shift, which leverages the shifting trick \eqref{eq:key2} to derive another Lipschitz bound.

\begin{remark} Based on the above derivations, it is clear that we can leverage various feasible points for \eqref{eq:key} to obtain Lipschitz bounds in different forms.
As a matter of fact, one can build upon our derivations to obtain even more choices of Lipschitz bounds. For example,
since \eqref{eq:key} is linear in $\Lambda_k^{-1}$, we can interpolate the choice of $\Lambda_k^{-1}$ in the above bounds to easily derive new Lipschitz bounds. However, 
    it remains unclear how to compare the feasible points used in our various Lipschitz bounds without calculating the numerical values. In Section \ref{sec:improve}, we will discuss how to combine these bounds to reduce the conservatism in ECLipsE-Fast.
\end{remark}

\subsection{Reducing Conservatism in ECLipsE-Fast}
\label{sec:improve}

Each Lipschitz bound above corresponds to a particular feasible point of LipSDP \eqref{eq:LipSDP}. However, it is challenging to determine a priori which feasible point will yield the tightest bound, as in general these points are not directly comparable. The effectiveness of a particular feasible solution depends on the specific structure of the neural network and the properties of the input domain. In this section, we will first illustrate the potential conservatism of ECLipsE-Fast by explaining why choosing $c_k=1$ $\forall k$ may not be optimal for ECLipsE-SN, and then discuss how to leverage all the bounds in Section \ref{sec:LipBounds} to reduce such unnecessary conservatism.

Recall that ECLipsE-Fast is a special case of ECLipsE-SN with $c_k=1$ $\forall k$. In \cite{xueclipse}, some geometric explanations are provided to justify the development of ECLipsE-Fast. Specifically, the main intuition is based on \cite[Proposition 4]{xueclipse} which states that the choice of $\Lambda_k$ in \eqref{eq:feasible1} with $c_k=1$ minimizes $\sigma_{\max}((2\Lambda_k-\Lambda_k W_k M_k^{-1} W_k^\tp \Lambda_k)^{-1})$. However, an important issue is that there is no direct relationship between $\sigma_{\max}((2\Lambda_k-\Lambda_k W_k M_k^{-1} W_k^\tp \Lambda_k)^{-1})$ and the final resultant Lipschitz bound $\sigma_{\max}(W_{l+1}(2\Lambda_l-\Lambda_l W_l M_l^{-1} W_l^\tp \Lambda_l)^{-1}W_{l+1}^\tp)$. 
To see this, we can first set $k=l$.
When $M_l$ is given, the optimal choice of $\Lambda_l$ should be the one that minimizes  $\sigma_{\max}(W_{l+1}(2\Lambda_l-\Lambda_l W_l M_l^{-1} W_l^\tp \Lambda_l)^{-1}W_{l+1}^\tp)$ and obviously depends on $W_{l+1}$. In contrast, the choice of $\Lambda_l$ from ECLipsE-Fast does not depend on $W_{l+1}$, and hence is not optimal in general. Similarly, if we set $k=l-1$ and let $M_{l-1}$ be given, then the minimization of $\sigma_{\max}(W_{l+1}(2\Lambda_l-\Lambda_l W_l M_l^{-1} W_l^\tp \Lambda_l)^{-1}W_{l+1}^\tp)$ over $\{\Lambda_{l-1},\Lambda_l\}$ cannot be easily decoupled in a way that one can choose $\Lambda_{l-1}$ and $\Lambda_l$ separately. Continuing this argument, we can see that finding the optimal choice of $\{c_k\}$ for ECLipsE-SN is a subtle problem involving complex coupling between different $k$. Choosing $c_k=1$ $\forall k$ as ECLipsE-Fast may not be optimal.

In practice, it is quite difficult to determine a priori what choice of $\{c_k\}$ for ECLipsE-SN will give the best numerical Lipschitz bound for a given NN.
As a matter of fact, we do not even know how to select the Lipschitz bounds from Section \ref{sec:LipBounds} for a particular NN to be analyzed. Now we discuss some heuristics that can be used to reduce the conservatism in ECLipsE-Fast when computing Lipschitz bounds for any given neural networks. Fortunately, whenever we fix the values of $\{c_k\}$, all the bounds in Section \ref{sec:LipBounds} can be efficiently calculated with similar computational cost as ECLipsE-Fast. Therefore, we can choose $c_k=c$ $\forall k$ and then do grid search or bisection on the scalar hyperparameter $c$ to figure out the best choice for each bound. With these choices of $c$ for different bounds, we can calculate all the bounds in Section \ref{sec:LipBounds} efficiently and then report the best numerical value. This simple heuristic method immediately reduce the conservatism in ECLipsE-Fast while maintaining almost the same level of scalability. In the next section, we will perform numerical experiments to show that the best choice among the Lipschitz bounds in Section \ref{sec:LipBounds} is problem-dependent, and our proposed heuristic method reduces the conservatism in ECLipsE-Fast efficiently.

\section{Numerical Experiments}
\label{sec:num}

In this section, we provide numerical experimental results to support our claim that our new Lipschitz bounds can be combined to reduce the conservatism in ECLipsE-Fast. 
When implementing our bounds including ECLipsE-SN, ECLipsE-GC, ECLipsE-GCS, and ECLipsE-Shift, we set $c_k=c$ $\forall k$  and treat  $c$ as a hyperparameter. To push for the best numerical values of the Lipschitz bounds, one typically needs to fine-tune the value of $c$ via search or bisection. Due to the scalar nature, the tuning process of $c$ remains simple and computationally efficient.

Since our main point is that our derived bounds can reduce the conservatism in ECLipsE-Fast, it makes sense for us to adopt the same experimental setups from \cite{xueclipse}. 
Specifically, the experiments in \cite{xueclipse} consider networks trained on the MNIST dataset or generated randomly with varying depth/width. 
In this section,
our dervied bounds including ECLipsE-SN, ECLipsE-GC, ECLipsE-GCS, and ECLipsE-Shift are first compared with ECLipsE-Fast on the exact MNIST networks provided in the supplementary material of \cite{xueclipse}.  For randomly generated networks used in \cite{xueclipse}, the exact weight information has not been provided online. Instead, a code for generating random networks was originally provided in the supplementary material of \cite{xueclipse} for the purpose of reproduction. We directly download this code to generate various random networks with varying depth/width and then compare our bounds with ECLipsE-Fast on those resultant NNs. Our comparison is based on two key metrics: the numerical values of the resultant Lipschitz bounds and the computational time required for their evaluation. Notice that similar to \cite{xueclipse}, we systematically vary the number of neurons per layer in fully connected networks as well as the overall depth of the network to make our experimental study comprehensive and convincing.

\subsection{Lipschitz Bounds for MNIST Networks}

First, we directly download the MNIST networks provided by the online supplementary material of \cite{xueclipse}.
These are three-layer neural networks with \(\{100, 200, 300, 400\}\) neurons in each layer, trained on the MNIST dataset. Table~\ref{tab:mnist_comp} presents a comparison of the 
Lipschitz bounds computed using ECLipsE-Fast and the improved methods derived in Section \ref{sec:LipBounds}. 
Regarding the specific hyperparameter choices, the results in Table~\ref{tab:mnist_comp} are obtained using \( c = 1.3 \) for ECLipsE-SN, \( c = 1.99 \) for ECLipsE-GC and ECLipsE-GCS, and \( c = 1.7 \) for ECLipsE-Shift. The results in Table \ref{tab:mnist_comp} clearly demonstrate that for the MNIST networks used in \cite{xueclipse},  ECLipsE-SN with $c=1.3$, ECLipsE-GCS with $c=1.99$, and ECLipsE-Shift with $c=1.7$ can immediately improve the original results from ECLipsE-Fast. Interestingly, ECLipsE-Shift provides the most improvement in this case.
Our comparison also supports our claim  that the choice of \( c= 1 \) in ECLipsE-Fast is not always optimal for ECLipsE-SN. In this case, setting \( c = 1.3 \) in ECLipsE-SN can trivially reduce the conservatism.

Since the networks used in this analysis are relatively shallow, the computational time across different methods remains approximately the same, and a detailed discussion is omitted here.

\begin{table}
    \centering
    \vspace{0.1in}
    \begin{tabular}{c p{1cm}<{\raggedleft} p{1cm}<{\raggedleft} p{1cm}<{\raggedleft} p{1cm}<{\raggedleft} p{1cm}<{\raggedleft}}
        Neurons   & ECLipsE-Fast & ECLipsE-SN & ECLipsE-GC & ECLipsE-GCS & ECLipsE-Shift \\
        \hline
         & & & & & \\
        100   &  18.79 &   17.40  &  18.23 &  17.71  & \textbf{17.32 }     \\
        200   &  19.66 &   19.29  &  20.58 &  19.28  & \textbf{19.04 }     \\
        300   &  19.50 &   18.75  &  20.82 &  19.20  & \textbf{18.44 }    \\
        400   &  19.92 &   19.04  &  21.38 &  19.92  & \textbf{18.92 }     \\
        \hline
    \end{tabular}\caption{Comparing Lipschitz bounds on MNIST networks from \cite{xueclipse}. The best bounds are highlighted in bold.}
    \label{tab:mnist_comp}
\end{table}

\begin{table}
    \centering
   \begin{tabular}{c p{1cm}<{\raggedleft} p{1cm}<{\raggedleft} p{1cm}<{\raggedleft} p{1cm}<{\raggedleft} p{1cm}<{\raggedleft}}
    Depth   & ECLipsE-Fast & ECLipsE-SN & ECLipsE-GC & ECLipsE-GCS & ECLipsE-Shift\\
    \hline
    20   &  0.31  &  0.31  &  \textbf{0.30}  &  0.33  &  0.31  \\
    30   &  2.20  &  2.20  &  \textbf{2.11}  &  2.40  &  2.20  \\
    50   & 39.53  & 39.53  &  \textbf{37.43}  & 44.50  & 39.48  \\ 
    75   &  5.63  &  5.63  &  \textbf{5.21}  &  6.62  &  5.62  \\ 
    100  & 74.57  & 74.57  &  \textbf{67.64}  & 91.33  & 74.40  \\
    \hline
    \end{tabular}
    \caption{Lipschitz bounds with varying depth. \\The best bounds are highlighted in bold.}
    \label{tab:DNN_lyrs}
\end{table}

\begin{table}[t!]
    \centering
    \begin{tabular}{c p{1cm} p{1cm} p{1cm} p{1cm} p{1cm}}  
        Width   & ECLipse-Fast & ECLipse-SN & ECLipse-GC & ECLipse-GCS & ECLipse-Shift\\
        \hline
         & & & & & \\
    80    &   0.04     &    0.04      &  \textbf{0.036}     &       0.05    &       0.04 \\   
    100   &  74.57     &   74.57      &  \textbf{67.64}     &      91.33    &      74.45 \\   
    120   &  15.30     &   15.30      &  \textbf{14.01}     &      18.35    &      15.28 \\   
    140   &  27.84     &   27.84      &  \textbf{25.72}     &      32.64    &      27.80 \\   
    160   &   0.08     &    0.08      &  \textbf{ 0.07}     &       0.09    &       0.08 \\
        \hline
    \end{tabular}
    \caption{Lipschitz bounds with Varying Width.\\ The best bounds are highlighted in bold. }
    \label{tab:DNN_neurons}
\end{table}

\subsection{Randomly Generated Neural Networks}
For deep NNs, it is crucial that the Lipschitz bound computation scales efficiently with both the depth and width of the network. To assess the scalability of our bounds, we conducted a comprehensive comparison study similar to \cite{xueclipse} by varying the depth (number of layers) and width (number of neurons per layer) of NNs, and the results are reported in Table \ref{tab:DNN_lyrs} and Table \ref{tab:DNN_neurons}, respectively. In line with \cite{xueclipse}, we used their code available online to randomly generate weight matrices. 
Specifically, for Table~\ref{tab:DNN_lyrs}, we generated neural networks with \(\{20,30, 50, 75, 100\}\) layers, where each layer consists of 100 neurons. For Table~\ref{tab:DNN_neurons}, we
 randomly sample weight matrices for 100-layer neural networks with \(\{80, 100, 120, 140, 160\}\) neurons per layer. 

For both Table~\ref{tab:DNN_lyrs} and Table \ref{tab:DNN_neurons}, 
the networks are getting larger and hence we 
adopt a simpler choice of the hyperparameter $c$ as follows: \( c = 1.0 \) for ECLipsE-SN, \( c = 1.0 \) for ECLipsE-GC and ECLipsE-GCS, and \( c = 2.0 \) for ECLipsE-Shift. 
Notably, in this setting, if we actually change the value of $c$ to $1.3$,  the resultant numerical value of the Lipschitz bounds actually gets larger.
This actually demonstrates that the optimal choice of $c$ for ECLipsE-SN is highly problem-dependent, and one has to rely on search or bisection heuristics to explore the best choice of the $c$ value.

 In Table~\ref{tab:DNN_lyrs}, both ECLipsE-GC and ECLipsE-Shift outperform ECLipsE-Fast consistently, with ECLipsE-GC exhibiting the most significant improvement as the network depth increases from 20 layers to 100 layers.
 In Table \ref{tab:DNN_neurons}, ECLipsE-GC again consistently provides the best numerical values for the resultant Lipschitz bounds.
 We emphasize that the results in Table~\ref{tab:DNN_lyrs} and  Table \ref{tab:DNN_neurons} do not mean that one should always use ECLipsE-GC in the deep network setting. However, these results do demonstrate the potential benefits of combining ECLipsE-SN, ECLipsE-GC, ECLipsE-GCS, and ECLipsE-Shift to reduce the conservatism in ECLipsE-Fast.

For deep neural networks, the computational time required to estimate the bounds is another crucial factor. Figure~\ref{fig:DNN_time} presents a comparison of the computational time required by the proposed techniques as the number of neurons per layer increases from 80 to 160. Since ECLipsE-GC and ECLipsE-GCS replace the largest singular value computation with sum-based operators, they outperform ECLipsE-Fast in terms of computational efficiency. This demonstrates that these methods not only produce tighter Lipschitz bounds but also significantly reduce computational overhead, making them more suitable for large-scale neural networks.

\begin{figure}
    \centering
    \includegraphics[width=1\linewidth]{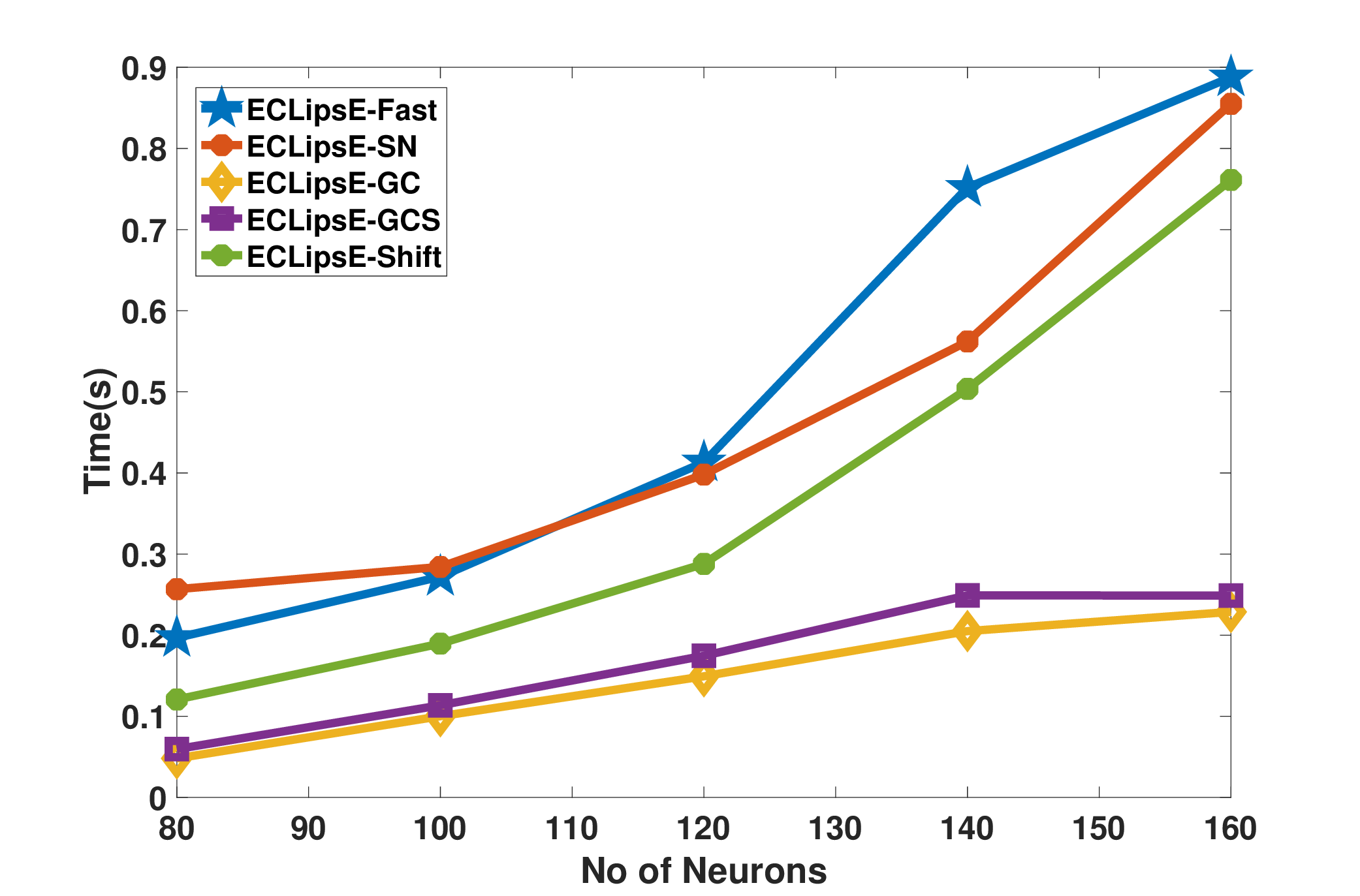}
    \caption{Computation time}
    \label{fig:DNN_time}
\end{figure}

\section{Conclusion}
\label{sec:conclud}

In this paper, we develop a family of new scalable Lipschitz bounds that do not require solving SDPs and can reduce the  
conservatism in the previous SOTA non-SDP bound ECLipsE-Fast. We provide a principled approach that leverages analytical solutions of a particular matrix inequality to streamline the developments of scalable Lipschitz bounds for deep networks.  Our numerical results support our claim that our derived Lipschitz bounds are less conservative than ECLipsE-Fast while maintaining the same level of scalability.

\bibliographystyle{IEEEtran}
\bibliography{main}

\begin{thebibliography}{10}
\providecommand{\url}[1]{#1}
\csname url@samestyle\endcsname
\providecommand{\newblock}{\relax}
\providecommand{\bibinfo}[2]{#2}
\providecommand{\BIBentrySTDinterwordspacing}{\spaceskip=0pt\relax}
\providecommand{\BIBentryALTinterwordstretchfactor}{4}
\providecommand{\BIBentryALTinterwordspacing}{\spaceskip=\fontdimen2\font plus
\BIBentryALTinterwordstretchfactor\fontdimen3\font minus
  \fontdimen4\font\relax}
\providecommand{\BIBforeignlanguage}[2]{{%
\expandafter\ifx\csname l@#1\endcsname\relax
\typeout{** WARNING: IEEEtran.bst: No hyphenation pattern has been}%
\typeout{** loaded for the language `#1'. Using the pattern for}%
\typeout{** the default language instead.}%
\else
\language=\csname l@#1\endcsname
\fi
#2}}
\providecommand{\BIBdecl}{\relax}
\BIBdecl

\bibitem{tsuzuku2018lipschitz}
Y.~Tsuzuku, I.~Sato, and M.~Sugiyama, ``Lipschitz-margin training: Scalable
  certification of perturbation invariance for deep neural networks,''
  \emph{Advances in Neural Information Processing Systems}, 2018.

\bibitem{bartlett2017spectrally}
P.~L. Bartlett, D.~J. Foster, and M.~J. Telgarsky, ``Spectrally-normalized
  margin bounds for neural networks,'' \emph{Advances in neural information
  processing systems}, 2017.

\bibitem{jin2020stability}
M.~Jin and J.~Lavaei, ``Stability-certified reinforcement learning: A
  control-theoretic perspective,'' \emph{IEEE Access}, vol.~8, pp.
  229\,086--229\,100, 2020.

\bibitem{fazlyab2021introduction}
M.~Fazlyab, M.~Morari, and G.~J. Pappas, ``An introduction to neural network
  analysis via semidefinite programming,'' in \emph{2021 60th IEEE Conference
  on Decision and Control (CDC)}, 2021, pp. 6341--6350.

\bibitem{fabiani2022reliably}
F.~Fabiani and P.~J. Goulart, ``Reliably-stabilizing piecewise-affine neural
  network controllers,'' \emph{IEEE Transactions on Automatic Control},
  vol.~68, no.~9, pp. 5201--5215, 2022.

\bibitem{LipNP2018}
A.~Virmaux and K.~Scaman, ``Lipschitz regularity of deep neural networks:
  analysis and efficient estimation,'' in \emph{Advances in Neural Information
  Processing Systems}, 2018.

\bibitem{fazlyab2019efficient}
M.~Fazlyab, A.~Robey, H.~Hassani, M.~Morari, and G.~Pappas, ``Efficient and
  accurate estimation of {L}ipschitz constants for deep neural networks,''
  \emph{Advances in Neural Information Processing Systems}, 2019.

\bibitem{pauli2021training}
P.~Pauli, A.~Koch, J.~Berberich, P.~Kohler, and F.~Allg{\"o}wer, ``Training
  robust neural networks using {L}ipschitz bounds,'' \emph{IEEE Control Systems
  Letters}, vol.~6, pp. 121--126, 2021.

\bibitem{pauli2022neural}
P.~Pauli, N.~Funcke, D.~Gramlich, M.~A. Msalmi, and F.~Allg{\"o}wer, ``Neural
  network training under semidefinite constraints,'' in \emph{IEEE Conference
  on Decision and Control (CDC)}, 2022, pp. 2731--2736.

\bibitem{revay2020lipschitz}
M.~Revay, R.~Wang, and I.~R. Manchester, ``{L}ipschitz bounded equilibrium
  networks,'' \emph{arXiv preprint arXiv:2010.01732}, 2020.

\bibitem{araujo2023a}
A.~Araujo, A.~J. Havens, B.~Delattre, A.~Allauzen, and B.~Hu, ``A unified
  algebraic perspective on {L}ipschitz neural networks,'' in
  \emph{International Conference on Learning Representations}, 2023.

\bibitem{wang2022}
Z.~Wang, G.~Prakriya, and S.~Jha, ``A quantitative geometric approach to
  neural-network smoothness,'' in \emph{Advances in Neural Information
  Processing Systems}, 2022.

\bibitem{havens2023exploiting}
A.~Havens, A.~Araujo, S.~Garg, F.~Khorrami, and B.~Hu, ``Exploiting connections
  between {L}ipschitz structures for certifiably robust deep equilibrium
  models,'' \emph{Advances in Neural Information Processing Systems}, 2023.

\bibitem{pauli2023novel}
P.~Pauli, A.~J. Havens, A.~Araujo, S.~Garg, F.~Khorrami, F.~Allg{\"o}wer, and
  B.~Hu, ``Novel quadratic constraints for extending lipsdp beyond
  slope-restricted activations,'' in \emph{International Conference on Learning
  Representations}, 2024.

\bibitem{barbara2024robust}
N.~H. Barbara, R.~Wang, and I.~R. Manchester, ``On robust reinforcement
  learning with {L}ipschitz-bounded policy networks,'' \emph{arXiv preprint
  arXiv:2405.11432}, 2024.

\bibitem{wang2023direct}
R.~Wang and I.~Manchester, ``Direct parameterization of lipschitz-bounded deep
  networks,'' in \emph{International Conference on Machine Learning}, 2023.

\bibitem{xue2022chordal}
A.~Xue, L.~Lindemann, A.~Robey, H.~Hassani, G.~J. Pappas, and R.~Alur,
  ``Chordal sparsity for lipschitz constant estimation of deep neural
  networks,'' in \emph{2022 IEEE 61st Conference on Decision and Control
  (CDC)}, 2022, pp. 3389--3396.

\bibitem{pauli2023lipschitz}
P.~Pauli, D.~Gramlich, and F.~Allg{\"o}wer, ``Lipschitz constant estimation for
  1d convolutional neural networks,'' in \emph{Learning for Dynamics and
  Control Conference}, 2023, pp. 1321--1332.

\bibitem{wang2024scalability}
Z.~Wang, B.~Hu, A.~J. Havens, A.~Araujo, Y.~Zheng, Y.~Chen, and S.~Jha, ``On
  the scalability and memory efficiency of semidefinite programs for
  {L}ipschitz constant estimation of neural networks,'' in \emph{International
  Conference on Learning Representations}, 2024.

\bibitem{pauli2024lipschitz}
P.~Pauli, D.~Gramlich, and F.~Allg{\"o}wer, ``{L}ipschitz constant estimation
  for general neural network architectures using control tools,'' \emph{arXiv
  preprint arXiv:2405.01125}, 2024.

\bibitem{xueclipse}
Y.~Xu and S.~Sivaranjani, ``Eclipse: Efficient compositional lipschitz constant
  estimation for deep neural networks,'' in \emph{The Thirty-eighth Annual
  Conference on Neural Information Processing Systems}, 2024.

\bibitem{szegedy2014intriguing}
C.~Szegedy, W.~Zaremba, I.~Sutskever, J.~Bruna, D.~Erhan, I.~Goodfellow, and
  R.~Fergus, ``Intriguing properties of neural networks,'' in
  \emph{International Conference on Learning Representations}, 2014.

\bibitem{horn2012matrix}
R.~Horn and C.~Johnson, \emph{Matrix Analysis}.\hskip 1em plus 0.5em minus
  0.4em\relax Cambridge University Press, 2012.

\bibitem{delattre2023efficient}
B.~Delattre, Q.~Barth{\'e}lemy, A.~Araujo, and A.~Allauzen, ``Efficient bound
  of lipschitz constant for convolutional layers by gram iteration,'' in
  \emph{International Conference on Machine Learning}, 2023.

\bibitem{grishina2024tight}
E.~Grishina, M.~Gorbunov, and M.~Rakhuba, ``Tight and efficient upper bound on
  spectral norm of convolutional layers,'' in \emph{European Conference on
  Computer Vision}, 2024, pp. 19--34.

\end{thebibliography}

\end{document}